\documentclass[12pt]{cleararxiv} 

\title[Reinforcement Learning Under Runtime Confounding]{Reinforcement Learning Under Runtime Confounding:\\Policy Learning and Off-Policy Evaluation}
\usepackage{times}

\clearauthor{%
 \Name{Raphael Kim}  
 \addr Cornell Tech, Cornell University
 \AND
 \Name{Michele Santacatterina} 
 \addr Division of Biostatistics, 
 NYU Grossman School of Medicine
 \AND
 \Name{Ramin Zabih} 
 \addr Cornell Tech, Cornell University
 \AND
 \Name{Keith Barnatchez}  
 \addr Department of Biostatistics, Johns Hopkins University
}

\usepackage{graphicx} 
\usepackage{amsmath,amssymb,mathtools}
\usepackage{enumitem}

\newcommand{\qt}{\quad\quad\quad}

\usepackage[capitalise]{cleveref}

\newcommand{\R}{\mathbb{R}}
\newcommand{\E}{\mathbb{E}}

\newcommand{\St}{\mathcal{S}}
\newcommand{\A}{\mathcal{A}}

\newcommand{\indep}{\perp \!\!\! \perp}
\usepackage{graphicx,tikz}
\usetikzlibrary{arrows.meta}

\newcommand{\pr}{\mathbb{P}}
\newcommand{\I}[1]{\mathbb{I}[#1]}

\newcommand{\V}{\mathbb{V}}

\newcommand{\HX}{H^{X}}
\newcommand{\RTC}{\text{RC}\xspace}

\begin{document}

\maketitle

\begin{abstract}%
Reinforcement learning (RL) is increasingly used to train robust, adaptable systems across healthcare, e-commerce, and robotics. In these settings, legal, ethical, or logistical concerns may preclude certain key variables that drive decisions from being collected during deployment time. This gives rise to a setting known as runtime confounding: the logged data hold full observability, but deployment time lacks knowledge of certain variables, known as runtime confounders. Unlike single timepoint and non-adaptive settings, runtime confounding under reinforcement learning has the potential to propagate over time, as earlier decisions affect future states, observations, and action assignments, leading to biased learning. 
We address these problems with three contributions: we (1) develop policy learning algorithms under runtime-confounding, (2) develop semi-parametric efficient off-policy estimation procedures, and (3) provide diagnostics for understanding the price of runtime confounding. 
\end{abstract}

\begin{keywords}%
reinforcement learning, runtime confounding, off-policy estimation, policy learning

\end{keywords}

\section{Introduction}

Sequential decision-making rules are often deployed in settings with less information than was available during training. For example, a clinical decision-support tool may be trained on records that include laboratory results, clinician notes, or socioeconomic variables, but some of these variables that relate to sensitive subgroup classification, such as socioeconomics, may not be allowed as inputs when the tool is deployed to protect against discriminative profiling.
This problem can occur in other settings with high-stakes decision-making, such as adjusting credit limits or child welfare screening. Despite having knowledge of all variables that influence outcomes in historical logs, during deployment time, one may not have access to an individual's protected attributes for ethical, legal, or logistical reasons. 
More formally, these variables observed in historical data but are unavailable at deployment are known as \textit{runtime confounders}, and the setting of misaligned logging data and deployment data is referred to as \emph{runtime confounding} \citep{coston2020counterfactual}.

While a large part of reinforcement learning (RL) literature studies a related problem of unobserved confounding, to the best of our knowledge, no work studies runtime confounding precisely. 
More precisely, past work assume both logging and deployment lack confounders, and handle the non-identifiability with a range of approaches including causal bounds and online identification \citep{bareinboim2015bandits,zhang2019near}, sensitivity analysis \citep{kallus2018confounding,namkoong2020off}, and proximal or partially observable formulations \citep{tennenholtz2020off,bennett2021proximal,uehara2023future}. 
In runtime confounding, the difficulty lies in the fact that the deployed policy can only condition on a variable set that does not include the runtime confounders, leading to a different policy class than that used in the historical data and violations to sequential randomization during deployment. 
\citet{coston2020counterfactual} address runtime confounding in single-period counterfactual prediction, and recent work studies a conformal prediction approach in the same setting \citep{barnatchez2026conformal}.

We address this gap by proposing a semiparametric-efficient framework for policy learning and off-policy evaluation under runtime confounding. The key observation is that the runtime restriction enters only through the numerator of an importance ratio. Hence, one may take a reweighting approach to recover the value of any policy in the runtime-confounding policy class, with all nuisances estimated from the full data. As a result, reinforcement leanring under runtime confounding can fit within the ideas of doubly robust off-policy evaluation \citep{jiang2016doubly,thomas2016data,KallusUehara2020DRL,van2015targeted} with cross-fitted nuisances \citep{chernozhukov2018double}, as well as the literature on dynamic treatment regimes for restricted policies \citep{luedtke2016super,sakaguchi2025estimation} and the broader literature on policy learning with observational data \citep{athey2021policy}.

Our contributions are as follows.
\begin{enumerate}
  \item \emph{Policy learning.} We propose a doubly robust policy learning algorithm for reinforcement learning under runtime confounding over a class of runtime deployable policies (Algorithm~\ref{alg:rtc}). Its regret relative to the full-information optimum is an irreducible runtime regret gap and efficient rates (Theorem~\ref{thm:regret}).
  \item \emph{Semiparametric-Efficient Off-Policy Evaluation.} For a fixed runtime policy, we propose a doubly-robust estimator that is asymptotically normal and attains the semiparametric efficiency bound, yielding valid confidence intervals and testing procedures (Theorem~\ref{thm:clt}).
  \item \emph{Diagnosing and Characterizing the Price of Runtime Confounding.} Because the runtime confounders are logged, we can test the price of runtime confounding. We propose a test for the irreducible runtime regret gap, giving a decision-relevant quantity for having to accept or choosing these runtime confounders.
\end{enumerate}

\section{Problem Setting}
\label{sec:problem-setting}

\subsection{Notation}
\label{sec:notation}
We consider a setting where we have complete observation logging data available over $i \in[n]$ sample trajectories and $t \in [T]$ timepoints. Trajectories are denoted by
\[
    O=(S_1,A_1,R_1,\dots,S_T,A_T,R_T),\qquad S_t=(X_t,U_t)\in\St
\]
where $S_t$ is the full-state consisting of $X_t$ (available at deployment) and $U_t$ (recorded in the logs, but unavailable at runtime), $A_t\in \A$ is the action, and $R_t \in \R$ is the reward, bounded by $R_{max}$. Define the histories
\begin{align*}
    H_t &= (S_1,A_1,R_1,\dots,S_{t-1},A_{t-1},R_{t-1},S_t), \quad   \HX_t = (X_1,A_1,R_1,\dots,X_{t-1},A_{t-1},R_{t-1},X_t) 
\end{align*}
so that $\HX_t$ is the runtime version of the history, or $H_t$ with each full state $S_s$ replaced by its runtime component $X_s$. 
The observed data are generated by a logging policy $\pi_b(\cdot \mid H_t)$ which we allow to be known or unknown by the researcher, where rewards and states are generated according to $R_t\sim p_t(\cdot\mid H_t,A_t)$ and $S_{t+1}\sim p_t(\cdot\mid H_t,A_t,R_t)$, respectively. Importantly, during deployment $U_t$ is unavailable so that newer target policies which depend on $U_t$ are infeasible. Figure \ref{fig:setting} outlines our problem setting. 

We occasionally write $A_t^\pi$ to make dependence on the policy $\pi$ explicit, though we often omit this notation for brevity. Letting $\bar a_s = a_1,\ldots,a_s)$ denote a fixed sequence of actions, we let $R_t(\bar a_t)$ denote the counterfactual reward under action sequence $\bar a_t$, and similarly let $S_t(\bar a_{t-1})$ denote the counterfactual state at time $t$ that would be observed under action sequence $\bar a_{t-1}$.

\begin{figure}
    \centering
    \resizebox{0.8\linewidth}{!}{%
\begin{tikzpicture}[
  x=1cm,y=1cm,font=\small,
  box/.style={
    draw=black!65,rounded corners=2pt,line width=.6pt,
    align=center,minimum height=.64cm,inner sep=4pt
  },
  observed/.style={box,fill=blue!5},
  unavailable/.style={box,dashed,fill=black!4},
  policy/.style={box,fill=blue!9,text width=2.55cm},
  environment/.style={box,fill=black!3,text width=2.55cm},
  link/.style={
    -{Stealth[length=1.7mm]},line width=.7pt,draw=black!75
  }
]
\path[use as bounding box] (-.1,.45) rectangle (14.7,-3.65);

\node[font=\small\bfseries] at (3.45,.15)
  {Historical data};

\node[observed,text width=1.85cm] (hx-log) at (1.15,-.85)
  {Runtime history\\$H_t^X$};
\node[observed,text width=2.2cm] (u-log) at (1.15,-2.5)  {Logged\\\mbox{confounders}\\$\bar U_t$};
\node[policy] (policy-log) at (5.0,-.85)
  {Logging policy\\$\pi_b(\cdot\mid H_t)$};
\node[environment] (env-log) at (5.0,-2.5)
  {Rewards / transitions\\$R_t,\ S_{t+1}$};

\draw[link] (hx-log.east) -- (policy-log.west);
\draw[link] (u-log.north east) -- (policy-log.south west);
\draw[link] (u-log.east) -- (env-log.west);
\draw[link] (policy-log.south)
  -- node[right] {$A_t$} (env-log.north);


\draw[black!20,line width=.5pt] (7.25,.35) -- (7.25,-3.55);

\begin{scope}[xshift=7.6cm]
\node[font=\small\bfseries] at (3.45,.15)
  {Deployment};

\node[observed,text width=1.85cm] (hx-run) at (1.15,-.85)
  {Runtime history\\$H_t^X$};
\node[unavailable,text width=1.85cm] (u-run) at (1.15,-2.5)
  {Unavailable to policy\\$\bar U_t$};
\node[policy] (policy-run) at (5.0,-.85)
  {Target policy\\$\pi_t(\cdot\mid H_t^X)$};
\node[environment] (env-run) at (5.0,-2.5)
  {Rewards / transitions\\$R_t,\ S_{t+1}$};

\draw[link] (hx-run.east) -- (policy-run.west);
\draw[link] (u-run.east) -- (env-run.west);
\draw[link] (policy-run.south)
  -- node[right] {$A_t$} (env-run.north);

\end{scope}
\end{tikzpicture}%
}

    \caption{Problem setting. $U_t$ is observed in the historical data, but unmeasured during deployment and in turn unavailable to the target policy.}
    \label{fig:setting}
\end{figure}

\subsection{Target Parameters and Identification}
\label{sec:target-params}

Our target is the $\gamma-$discounted objective function under a fixed policy $\pi$: 
\begin{equation}\label{eq:obj}
    J(\pi) =  \E_\pi\Bigl[ \sum_{t=1}^{T} \gamma^{t-1} R_t \Bigr]
\end{equation}
In RL, one typically seeks to learn an optimal policy, and evaluate fixed policies, which act on the full history $H_t$, or current contexts $S_t$ and any relevant past information. What makes runtime confounding unique compared to standard RL settings is that we do not observe $U_t$ at runtime, so policies that require $U_t$ are infeasible.  In turn, we instead aim to perform inference and learning over the policy class which acts only over the variables available at runtime:
\[
    \Pi_X=\bigl\{\pi=(\pi_t)_{t\le T}:\mbox{ } \pi_t(\cdot\mid\HX_t), \mbox{ } 
    \sigma(\HX_t)\text{-measurable}\bigr\}
\]

Importantly, $J(\pi)$ is not immediately identified for policies $\pi$ other than the logging policy $\pi_b$ that generated the observed data. We assume the following identification conditions to identify $J(\pi)$ for any $\pi\in\Pi_X$.
\begin{enumerate}[label=(\textbf{Id\arabic*}),ref=(\textbf{Id\arabic*}),leftmargin=*]
  \item\label{ass:seq} \textit{Sequential ignorability.} 
  For all $t \in [T]$, and for any generic action sequence $\bar a_T = (a_1, \dots ,a_T)$, 
  \[
  R_t(\bar a_t), \ldots, R_T(\bar a_T), S_{t+1}(\bar a_t), \ldots, S_T(\bar a_{T-1}) \indep A_t \mid H_t
  \]
  \item \label{ass:consistency} \textit{Consistency}: $S_t = S_t(\bar A_{t-1})$ and $R_t = R_t(\bar A_t)$ for all $t$.
  \item\label{ass:overlap} \textit{Overlap.} $\pi_b(a\mid H_t)\ge\eta_g>0$ for all $a\in\A$ and $t\in[T]$.
\end{enumerate}
\ref{ass:seq} -- \ref{ass:overlap} are standard identification assumptions in policy learning \citep{murphy2001marginal,zhang2013robust} and reinforcement learning \citep{KallusUehara2020DRL}.

With these identification assumptions, one can identify $J(\pi)$ from the observed data. Let the superscript $\RTC$ denote the runtime version of any fixed object. Then define the runtime importance ratio as:
\begin{equation}\label{eq:rho}
    \rho^{\RTC}_{1:t} =\prod_{s=1}^{t}\lambda^{\RTC}_s, \qquad  \mbox{for }
    \lambda^{\RTC}_s=\frac{\pi_s(A_s\mid\HX_s)}{\pi_b(A_s\mid H_s)}
\end{equation}
Here, the importance weight takes a particular form: In contrast to the non \RTC setting which uses $\rho_{1:t} = \prod_{s=1}^{t} \frac{\pi_s(A_s\mid S_s)}{\pi_b(A_s\mid H_s)}$, the \RTC importance weight has the numerator correspond to runtime-confounded quantities while the denominator is the full information policy. For $t = T,\ldots,1$, we then recursively define the Q- and Value function.
\begin{align*}
    Q_t^\pi(H_t,a) &= \E_{\pi_b}[R_t + \gamma V_{t+1}^\pi(H_{t+1}) \mid H_t, A_t=a] \\
    V_t^\pi(H_t) &= \sum_a \pi_t(a \mid H_t^X)Q_t^\pi(H_t,a)
\end{align*}
where $V_{T+1}^\pi=0$. Again, the primary difference is the action density that we marginalize over --- a runtime policy.

Under Assumptions \ref{ass:seq}-\ref{ass:overlap}, $J(\pi)$ is identified by the observed data functionals:
\begin{equation}
    J(\pi) = \E_{\pi_b}[V_1^\pi(H_1)] = \E_{\pi_b} \left[ \sum_{t=1}^T \gamma^{t-1} \rho_{1:t}^\RTC R_t \right]
    \label{eq:id}
\end{equation}

Importantly, the identifying functional in \eqref{eq:id} does not discard the historical $U_t$. Discarding variables that unavailable at runtime is a common practice in data science and machine learning \citep{coston2020counterfactual}, but will tend to yield invalid identifying functionals when the discarded variables are truly confounders. Specifically, focusing on the importance-weighted identifying functional in \eqref{eq:id},
if we were to to instead use the importance weights
\[
\rho^{X}_{1:t} =\prod_{s=1}^{t}\lambda^{X}_s,  \qquad\mbox{where} \qquad \lambda_s^X = \frac{\pi_s(A_s \mid H_s^X)}{\pi_b(A_S \mid H_s^X)}
\]
the resulting identifying functional would generally not equal $J(\pi)$. In turn, standard methods for both policy learning and inference \citep{KallusUehara2020DRL} that naively estimate this functional would target the wrong parameter inducing unintended bias.

\section{Efficient Estimation and Policy Learning}
Both policy-learning and off-policy estimation will build on the efficient influence function (EIF) for the identifying functional \eqref{eq:id} of the target parameter $J(\pi)$. EIFs are crucial ingredients for constructing estimators of statistical parameters, and enable valid $\sqrt n$ asymptotic inference even when estimators of underlying nuisance functions themselves converge at slower rates \citep{kennedy2024semiparametric}. 
Throughout, we will estimate the EIF of $J(\pi)$ using nuisance models for the propensity and the conditional outcome model, denoted by $\eta=(\pi_b(\cdot\mid H_t),\{Q^\pi_t(H_t,\cdot)\})$. Writing $\delta^\pi_t=R_t+\gamma V^\pi_{t+1}(H_{t+1})-Q^\pi_t(H_t,A_t)$ for the Bellman residual, the efficient influence function of $J$ at fixed $\pi\in\Pi_X$ is given by the following Theorem:

\begin{theorem}[Efficient Influence Function under Runtime-Confounding]\label{thm:EIF}
Fix $\pi\in\Pi_X$. Under our assumptions \labelcref{ass:seq}-\labelcref{ass:overlap}, the efficient influence function of $J(\pi)$ is
\begin{equation}
  \varphi(O;\pi,\eta) = V^\pi_1(H_1)-J(\pi) +\sum_{t=1}^{T}\gamma^{t-1}\rho^{\RTC}_{1:t}\,\delta^\pi_t
  \label{eq:eif}
\end{equation}
where $\E[\varphi(O ; \pi,\eta)]=0$. The proof is found in \cref{pf:EIF}.
\end{theorem}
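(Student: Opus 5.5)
We will derive the efficient influence function by computing the pathwise (Gateaux) derivative of the identifying functional $\Psi(P)=\E_P[V^\pi_1(H_1)]$ along regular one-dimensional parametric submodels $P_\epsilon$ through the observed-data law. Throughout, $\pi\in\Pi_X$ is held fixed and does not depend on $P$. We use the sequential factorization of the density of $O$:
\[
p(O)=p(S_1)\prod_{t=1}^{T}\pi_b(A_t\mid H_t)\,p_t(R_t\mid H_t,A_t)\,p_t(S_{t+1}\mid H_t,A_t,R_t).
\]
The score therefore decomposes into orthogonal pieces: one for $S_1$, one for each action factor, and one for each reward/transition factor. The model is nonparametric apart from the overlap condition \ref{ass:overlap}, so the tangent space is saturated. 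Any mean-zero influence function obtained this way is therefore the efficient one, and it suffices to show $\frac{d}{d\epsilon}\Psi(P_\epsilon)|_{\epsilon=0}=\E[\varphi(O)\,s(O)]$ for every score $s$.

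\textbf{Step 1: identification.} We first record $\E_{\pi_b}[V_1^\pi(H_1)]=\E_{\pi_b}\bigl[\sum_t\gamma^{t-1}\rho^{\RTC}_{1:t}R_t\bigr]$ from \eqref{eq:id} as the g-formula. Unrolling the recursion shows that $\Psi$ is an integral of the rewards against the measure
\[
p(s_1)\prod_t \pi_t(a_t\mid h^X_t)\,p_t(r_t\mid h_t,a_t)\,p_t(s_{t+1}\mid h_t,a_t,r_t).
\]
In this measure the logging propensity $\pi_b$ does not appear. Its derivative is therefore zero along the action-score directions, which matches the fact that each term $\rho^{\RTC}_{1:t}\delta^\pi_t$ has conditional mean zero given $(H_t,A_t)$.

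\textbf{Step 2: telescoping.} We differentiate $\Psi(P_\epsilon)$ one factor at a time, working backward in time.
\begin{itemize}
\item Perturbing $p(S_1)$ contributes $\E[(V_1^\pi(H_1)-J(\pi))s_{S_1}]$.
\item Perturbing the time-$t$ reward/transition kernel contributes
\[
\E_{P^\pi}\bigl[\gamma^{t-1}\{R_t+\gamma V^\pi_{t+1}(H_{t+1})-Q^\pi_t(H_t,A_t)\}\,s_t\bigr],
\]
where $P^\pi$ is the interventional law under $\pi$.
\end{itemize}
The key change of measure is
\[
\frac{dP^\pi}{dP_{\pi_b}}\Big|_{(H_t,A_t,R_t,S_{t+1})}=\rho^{\RTC}_{1:t}.
\]
This holds because the two laws share every factor except the action kernels, and those have ratio $\pi_s(A_s\mid\HX_s)/\pi_b(A_s\mid H_s)$. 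Here it matters that $\pi_s$ is $\sigma(\HX_s)\subseteq\sigma(H_s)$-measurable, so it is a legitimate conditional action kernel given $H_s$. Overlap \ref{ass:overlap} makes the ratio bounded.

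Because $\delta^\pi_t$ has conditional mean zero given $(H_t,A_t)$, we can replace $s_t$ by the full score $s(O)$. The cross terms vanish by iterated expectations, since the scores of the other factors are conditional-mean-zero at the appropriate stages. Summing the contributions gives $\E[\varphi s]$ with $\varphi$ as in \eqref{eq:eif}. The mean-zero property follows immediately: $\E[V_1^\pi(H_1)]=J(\pi)$, and $\E[\rho^{\RTC}_{1:t}\delta^\pi_t]=\E[\rho^{\RTC}_{1:t}\E[\delta^\pi_t\mid H_t,A_t]]=0$.

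\textbf{Main obstacle.} The delicate point is the derivative of the nested conditional expectations. Perturbing the time-$t$ kernel changes $Q^\pi_t$, and that change propagates to $V^\pi_t$ and to all earlier $Q^\pi_s$, $s<t$. We plan to handle this by induction on $T$. The induction hypothesis is that the EIF of $\E[V^\pi_{t}(H_t)\mid\cdot]$ viewed as a functional of the kernels from $t$ onward is $\sum_{s\ge t}\gamma^{s-t}\rho^{\RTC}_{t:s}\delta^\pi_s$. The inductive step then applies the chain rule together with the one-step identity
\[
\partial_\epsilon V^\pi_t=\sum_a\pi_t(a\mid\HX_t)\,\partial_\epsilon Q^\pi_t(H_t,a).
\]
The runtime restriction enters only here, through a numerator that depends on $\HX_t$ rather than $H_t$, and it leaves the argument otherwise unchanged.
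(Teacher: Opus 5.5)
Your proof is correct. It reaches \eqref{eq:eif} by a different route from the paper's, and your ``main obstacle'' paragraph is in fact the paper's route.

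\textbf{The paper's route.} It uses point-mass contamination $\pr_\varepsilon=(1-\varepsilon)\pr+\varepsilon\delta_{\tilde z}$ and works out $T=1$ by hand. For general $T$ it runs a backward recursion on $\Gamma_t=\E[\rho^{\RTC}_{1:t-1}\tfrac{d}{d\varepsilon}V^\pi_t(H_t)|_{\varepsilon=0}]$. At each step it differentiates $Q^\pi_t$ with the regression-derivative formula and turns $\sum_a\pi_t(a\mid\HX_t)(\cdot)$ into the one-step weight $\lambda^{\RTC}_t$. This gives $\Gamma_t=\rho^{\RTC}_{1:t}\tilde\delta^\pi_t+\gamma\Gamma_{t+1}$, which it then unrolls.

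\textbf{Your route.} You differentiate $\Psi=\E_{P^\pi}[\sum_u\gamma^{u-1}R_u]$ along regular submodels and change measure once, via $dP^\pi/dP_{\pi_b}=\rho^{\RTC}_{1:t}$ on $\sigma(H_{t+1})$. Your Step 2 already removes the obstacle you flag. Because $\pi$ is fixed, the score of $P^\pi_\epsilon$ is $s_{S_1}+\sum_t s_t$, with no action scores. So the time-$t$ contribution is $\E_{P^\pi}[G\,s_t]$ with $G=\sum_u\gamma^{u-1}R_u$, and conditioning on $H_{t+1}$ yields $\gamma^{t-1}\delta^\pi_t$ directly. The way a perturbed kernel feeds into earlier $Q^\pi_u$ is absorbed by the tower property under the joint interventional law. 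The induction on $T$ you sketch is essentially the $\Gamma_t$ recursion and is unnecessary.

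\textbf{What your route buys.} It is rigorous for continuous histories, whereas the paper's $\I{H_t=\tilde h_t}/f(H_t,a)$ manipulations are only formal there. It also gives an explicit efficiency argument through the tangent space.

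\textbf{One gap to close.} Your claim that the tangent space is saturated holds only when $\pi_b$ is unknown, and the paper allows $\pi_b$ to be known. State explicitly that $\varphi$ is orthogonal to every action-score space $\{\ell(H_t,A_t):\E[\ell\mid H_t]=0\}$:
\begin{itemize}
\item the terms $V^\pi_1-J$ and $\rho^{\RTC}_{1:u}\delta^\pi_u$ with $u<t$ vanish against $\ell$ by $\E[\ell\mid H_t]=0$;
\item the terms with $u\ge t$ vanish by $\E[\delta^\pi_u\mid H_u,A_u]=0$.
\end{itemize}
Hence $\varphi$ lies in the restricted tangent space and remains the canonical gradient whether $\pi_b$ is known or unknown. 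The paper's closing remark asserts this without proof.

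\textbf{What the paper's route buys.} It mirrors the single-period calculation and the backward fitted-$Q$ structure of Algorithm~\ref{alg:rtc}.
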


The EIF in \eqref{eq:eif} is analogous to the EIF considered by \cite{KallusUehara2020DRL} for non-Markov decision processes, where here the importance weights are restrictied to accommodate policies $\pi \in \Pi_X$. 

Theorem \ref{thm:EIF} motivates an estimator based on the EIF $\varphi(O; \pi, \eta)$, where we use cross-fitting \citep{chernozhukov2018double} to ensure the resulting asymptotic distribution remains tractable. Specifically, letting $\hat\eta=(\hat g,\{\hat q^\pi_t\})$ denote nuisances cross-fitted on folds not containing unit $i$, estimate the objective by
\begin{equation}
  \hat J(\pi)=\frac1n\sum_{i=1}^n\Bigl\{\widehat V^\pi_1(H^{(i)}_1)
  +\sum_{t=1}^T\gamma^{t-1}\hat\rho^{\RTC,(i)}_{1:t}\hat\delta^{\pi,(i)}_t\Bigr\}
  \label{eq:Jhat}
\end{equation}
The full estimation procedure is outlined in Algorithm \ref{alg:rtc}.

\begin{algorithm2e}[t]
\DontPrintSemicolon
\caption{Doubly robust policy learning and inference under runtime confounding}
\label{alg:rtc}
\SetKwFunction{FValue}{Value}
\SetKwProg{Fn}{Function}{:}{}
\KwIn{Logged trajectories $\{O_i\}_{i=1}^n$ with $S_{it}=(X_{it},U_{it})$; policy class $\Pi\subseteq\Pi_X$; folds $K$; overlap constant $\eta_g$; $C_Q=R_{\max}\sum_t\gamma^{t-1}$}
\KwOut{$\hat\pi$, $\hat J(\cdot)$, confidence intervals}
Split $[n]$ into folds $\mathcal I_1,\dots,\mathcal I_K$\;
\tcp{Propensities (full information), fitted on $\mathcal I_{-k}$}
\For{$k=1,\dots,K$}{
  \For{$t=1,\dots,T$}{
    Fit $\hat g^{(-k)}_t(a\mid H_t)$ by regressing $A_t$ on $H_t$\;
  }
}
\tcp{Cross-fitted value estimate for $\pi\in\Pi_X$}
\Fn{\FValue{$\pi$}}{
  \For{$k=1,\dots,K$}{
    Set $\hat V^{(-k)}_{T+1}\equiv0$\;
    \tcp{Fitted-$Q$ evaluation on $\mathcal I_{-k}$}
    \For{$t=T,\dots,1$}{
      Fit $\hat q^{\pi,(-k)}_t(H_t,a)$ by regressing $R_t+\gamma\hat V^{(-k)}_{t+1}(H_{t+1})$ on $(H_t,A_t)$\;
      $\hat V^{(-k)}_t(H_t)\leftarrow\sum_a\pi_t(a\mid\HX_t)\hat q^{\pi,(-k)}_t(H_t,a)$\;
    }
    \tcp{Evaluation on the held-out fold $\mathcal I_k$}
    \For{$i\in\mathcal I_k$}{
      $\hat\rho^{\RTC}_{i,1:t}\leftarrow\prod_{s\le t}\frac{\pi_s(A_{is}\mid\HX_{is})}{\hat g^{(-k)}_s(A_{is}\mid H_{is})}$ for $t\in[T]$\;
      $\hat\delta^\pi_{it}\leftarrow R_{it}+\gamma\hat V^{(-k)}_{t+1}(H_{i,t+1})-\hat q^{\pi,(-k)}_t(H_{it},A_{it})$\;
      $\psi_i(\pi)\leftarrow\hat V^{(-k)}_1(H_{i1})+\sum_{t=1}^T\gamma^{t-1}\hat\rho^{\RTC}_{i,1:t}\hat\delta^\pi_{it}$\;
    }
  }
  \KwRet{$\hat J(\pi)=\frac{1}{n}\sum_{i=1}^n\psi_i(\pi)$ and $\hat\sigma^2(\pi)=\frac{1}{n}\sum_{i=1}^n\bigl(\psi_i(\pi)-\hat J(\pi)\bigr)^2$}\;
}
\tcp{Learning and inference}
\textbf{Policy learning:} $\hat\pi\leftarrow\arg\max_{\pi\in\Pi}$ \FValue{$\pi$}\;
\textbf{Inference for a fixed $\pi$:} report $\hat J(\pi)\pm z_{1-\alpha/2}\hat\sigma(\pi)/\sqrt{n}$\;
\end{algorithm2e}

\paragraph{Error Decomposition with the EIF. }
We now describe the high-level decomposition used to produce our policy learning and off-policy evaluation guarantees. Let $\pr_n f=n^{-1}\sum_{i=1}^n f(O_i)$ denote the empirical measure, $\widehat\varphi^\pi$ denote $\varphi^\pi$ with cross-fitted nuisance estimates, and $\Delta g_t=\hat g_t-\pi_b$ and $\Delta q^\pi_t=\hat q^\pi_t-Q^\pi_t$ denote estimation error of each respective nuisance function. To analyze $\hat J(\pi)$, we decompose our error into the following terms:
\begin{equation}
  \hat J(\pi)-J(\pi)
  =\underbrace{(\pr_n-\E)\varphi^\pi}_{\text{(E)}} +\underbrace{\mathrm{Rem}(\hat\eta;\pi)}_{\text{(R)}} +\underbrace{(\pr_n-\E)(\widehat\varphi^\pi-\varphi^\pi)}_{\text{(C)}}
  \label{eq:expand}
\end{equation}
where the remainder is given by:
\begin{equation}
  \mathrm{Rem}(\hat\eta;\pi) =\sum_{t=1}^{T}\gamma^{t-1} \E\Bigl[\hat\rho^{\RTC}_{1:t-1}\sum_{a}\pi_t(a\mid\HX_t) \frac{\Delta g_t(a\mid H_t)}{\hat g_t(a\mid H_t)} \Delta q^\pi_t(H_t,a)\Bigr]
  \label{eq:exact}
\end{equation}

This is a standard decomposition in the semiparametric literature \citep{kennedy2024semiparametric}, where term (E) is sampling error, term (R) denotes nuisance bias, and term (C) is controlled by cross-fitting. This decomposition to establish regret bounds and conditions under which $\hat J(\pi)$ attains asymptotic normality.

Before introducing our results, we introduce the following \textbf{estimation assumptions}, which are sufficient to control the remainder term in \eqref{eq:exact}.
\begin{enumerate}[label=(\textbf{Est\arabic*}),ref=(\textbf{Est\arabic*}),leftmargin=*]
  \item\label{ass:product} \textit{Product rate at $\pi$.} $\sum_{t=1}^T\lVert\Delta g_t\rVert_2\lVert\Delta q^\pi_t\rVert_2=o_P(n^{-1/2})$.
  \item\label{ass:consist} \textit{Consistency at $\pi$.} $\lVert\Delta g_t\rVert_2+\lVert\Delta q^\pi_t\rVert_2=o_P(1)$ for each $t\in[T]$.
  \item\label{ass:complex} \textit{Complexity.} $\Pi_X$ has finite uniform entropy integral bounded by $\mathcal J<\infty$
\end{enumerate}
\labelcref{ass:product}-\labelcref{ass:consist} are standard rate assumptions in semiparametric inference supporting the use of non-Donsker function classes in \textit{estimation}. \labelcref{ass:complex} is an entropy condition on the runtime policy class, rather than the nuisance function estimator, used for regret analysis. In the coming sections, we establish off-policy evaluation (\labelcref{sec:ope}) and policy learning (\labelcref{sec:policy}) results based on these assumptions.

\subsection{Efficient Off-Policy Evaluation}\label{sec:ope}

We now fix $\pi\in\Pi_X$ and give conditions under which $\hat J(\pi)$ in \eqref{eq:Jhat} is asymptotically normal and efficient based on \cref{thm:EIF}.

\begin{theorem}[Asymptotic normality and efficiency]\label{thm:clt}
Under our identification assumptions \labelcref{ass:seq}-\labelcref{ass:overlap}, and estimation assumptions \labelcref{ass:product}-\labelcref{ass:consist},
\[
  \sqrt n\bigl(\hat J(\pi)-J(\pi)\bigr)\rightsquigarrow\mathcal N(0,\sigma^2_{\RTC})
\]
where $$\sigma^2_{\RTC}=\V\bigl[V^\pi_1(H_1)\bigr]+\sum_{t=1}^T\gamma^{2(t-1)}\E\bigl[(\rho^{\RTC}_{1:t})^2\V[\delta^\pi_t\mid H_t,A_t]\bigr] $$
is the semiparametric efficiency bound. Moreover $\hat\sigma^2_{\RTC}=\pr_n(\widehat\varphi^\pi)^2$ is consistent, so $\hat J(\pi)\pm z_{1-\tfrac \alpha 2}\frac{\hat\sigma_{\RTC}}{\sqrt n}$ is an asymptotically valid $(1-\alpha)$ confidence interval.
The proof is found in \cref{pf:clt}.
\end{theorem}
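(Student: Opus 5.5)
The plan is to work term by term through the decomposition \eqref{eq:expand}. Term (E) gives the Gaussian limit. Terms (R) and (C) are shown to be $o_P(n^{-1/2})$. The variance formula and efficiency then come from \cref{thm:EIF}.

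\textbf{Term (E).} By \cref{thm:EIF}, $\E[\varphi^\pi]=0$. The rewards are bounded by $R_{\max}$, so $|Q^\pi_t|,|V^\pi_t|\le C_Q$. Overlap \ref{ass:overlap} gives $|\rho^{\RTC}_{1:t}|\le \eta_g^{-t}$, since the numerator $\pi_t(\cdot\mid\HX_t)\le 1$. Hence $\varphi^\pi$ is bounded and has finite variance. The classical CLT gives $\sqrt n(\pr_n-\E)\varphi^\pi\rightsquigarrow\mathcal N(0,\V[\varphi^\pi])$.

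\textbf{Computing $\V[\varphi^\pi]$.} Set $M_t=\gamma^{t-1}\rho^{\RTC}_{1:t}\delta^\pi_t$.
\begin{itemize}
\item Each summand has conditional mean zero: $\E[\delta^\pi_t\mid H_t,A_t]=0$ by the definition of $Q^\pi_t$, and $\rho^{\RTC}_{1:t}$ is $(H_t,A_t)$-measurable.
\item Cross terms vanish by iterated expectations. For $s<t$, $M_s$ and $V_1^\pi(H_1)$ are $(H_t,A_t)$-measurable, so $\E[M_sM_t]=0$ and $\E[V_1^\pi(H_1)M_t]=0$.
\item The diagonal terms give $\E[M_t^2]=\gamma^{2(t-1)}\E[(\rho^{\RTC}_{1:t})^2\V[\delta^\pi_t\mid H_t,A_t]]$.
\end{itemize}
This yields exactly $\sigma^2_{\RTC}$. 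Since $\varphi^\pi$ is the EIF, its variance is the semiparametric efficiency bound, and an estimator with influence function $\varphi^\pi$ is efficient.

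\textbf{Term (R).} Use the exact expression \eqref{eq:exact}.
\begin{itemize}
\item Enforce (or assume) that the propensity estimates are bounded below, $\hat g_t\ge\eta_g$. Truncating at $\eta_g$ does not increase the error in $\lVert\cdot\rVert_2$.
\item Then $|\hat\rho^{\RTC}_{1:t-1}|\le\eta_g^{-(t-1)}$ and $|\Delta g_t/\hat g_t|\le \eta_g^{-1}|\Delta g_t|$.
\item Also $\sum_a\pi_t(a\mid\HX_t)|\cdot|\le\sum_a|\cdot|$, with $\A$ finite.
\item Cauchy--Schwarz gives $|\mathrm{Rem}|\lesssim\sum_t\lVert\Delta g_t\rVert_2\lVert\Delta q^\pi_t\rVert_2$, which is $o_P(n^{-1/2})$ by \ref{ass:product}.
\end{itemize}
Here the norms are $L_2(P)$ over $(H_t,a)$, computed conditionally on the training folds.

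\textbf{Term (C).} Work fold by fold and condition on the training sample $\mathcal I_{-k}$, so that $\widehat\varphi^\pi-\varphi^\pi$ is a fixed function.
\begin{itemize}
\item Chebyshev gives $(\pr_{n,k}-\E)(\widehat\varphi^\pi-\varphi^\pi)=O_P(n^{-1/2}\lVert\widehat\varphi^\pi-\varphi^\pi\rVert_2)$.
\item Expand the difference telescopically. Its pieces are $\hat V_1-V_1$ (controlled by $\Delta q_1$ through the bounded policy weights), differences of weights $\hat\rho-\rho$ (controlled by $\Delta g_s$ and bounded products), and $\hat\delta_t-\delta_t$ (controlled by $\Delta q_t$ and $\Delta q_{t+1}$).
\item The boundedness from overlap and from $C_Q$ then gives $\lVert\widehat\varphi^\pi-\varphi^\pi\rVert_2\lesssim\sum_t(\lVert\Delta g_t\rVert_2+\lVert\Delta q^\pi_t\rVert_2)=o_P(1)$ by \ref{ass:consist}.
\end{itemize}
Hence (C) is $o_P(n^{-1/2})$, and Slutsky's lemma gives the CLT.

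\textbf{Variance estimator.} Write $\pr_n(\widehat\varphi^\pi)^2-\E(\varphi^\pi)^2$ as
\[
\pr_n\big[(\widehat\varphi^\pi)^2-(\varphi^\pi)^2\big]+(\pr_n-\E)(\varphi^\pi)^2 .
\]
The second piece is $o_P(1)$ by the LLN. The first is bounded by $\pr_n|\widehat\varphi^\pi-\varphi^\pi|\,|\widehat\varphi^\pi+\varphi^\pi|$, which is $\lesssim \pr_n|\widehat\varphi^\pi-\varphi^\pi|=o_P(1)$ by boundedness and the $L_2$ consistency just established, applied per fold. Centering by $\hat J(\pi)$ changes this by only $o_P(1)$. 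Coverage of the interval then follows from the CLT and Slutsky's lemma.

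The main obstacle is the algebraic bookkeeping in term (C). One must bound the difference of the products $\hat\rho^{\RTC}_{1:t}-\rho^{\RTC}_{1:t}$ and the recursive $\hat V_{t+1}$ in $L_2$ using only nuisance errors. The plan is to telescope $\prod\hat\lambda_s-\prod\lambda_s=\sum_s(\prod_{r<s}\lambda_r)(\hat\lambda_s-\lambda_s)(\prod_{r>s}\hat\lambda_r)$, which works because all factors are uniformly bounded by overlap and truncation.
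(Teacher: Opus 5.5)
Your proposal is correct and follows essentially the same route as the paper's proof. Both use the (E)/(R)/(C) decomposition, the martingale-difference computation of $\V[\varphi^\pi]$, Cauchy--Schwarz on the exact remainder, fold-wise conditional Chebyshev for the cross-fitting term, and the bound $\bigl|\pr_n(\widehat\varphi^\pi)^2-\pr_n(\varphi^\pi)^2\bigr|\lesssim\pr_n|\widehat\varphi^\pi-\varphi^\pi|$ for the variance estimator. Your explicit truncation $\hat g_t\ge\eta_g$ and your telescoped bound on the difference of weight products only make precise steps the paper takes for granted, since the paper simply asserts that the estimated weights satisfy the same overlap bounds.
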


This result gives us a way to characterize uncertainty for runtime-confounded policies, and do valid testing procedures. The finding is analogous to the results from \cite{KallusUehara2020DRL}, but \RTC adjusted.

\subsection{Policy Learning under Runtime-Confounding}\label{sec:policy}
Let $\pi^\star=\arg \max_\pi J(\pi)$ over the unrestricted (full-information) policies, and $\pi^\star_X=\arg\max_{\pi\in\Pi_X}J(\pi)$.
The learned policy maximizes the cross-fitted estimator \eqref{eq:Jhat} over runtime policies,
\[
  \hat\pi=\arg\max_{\pi\in\Pi_X}\hat J(\pi)
\]
so that $\hat\pi$ takes only $\HX_t$ as input and is deployable.

\begin{theorem}[Regret under runtime confounding]\label{thm:regret}
Under the identification assumptions \labelcref{ass:seq}--\labelcref{ass:overlap}, \labelcref{ass:product} at $\hat\pi$ and $\pi^\star_X$, and \labelcref{ass:complex},
\begin{equation}
  J(\pi^\star)-J(\hat\pi)\leq\Delta_{\RTC}+O_P\Bigl(\frac{C_\rho\mathcal J}{\sqrt n}\Bigr)+o_P(n^{-1/2})
  \label{eq:regret}
\end{equation}
where $\Delta_{\RTC}=J(\pi^\star)-J(\pi^\star_X)$ and $C_\rho\lesssim(R_{\max}+C_Q)\sum_{t=1}^T\gamma^{t-1}\eta_g^{-t}$ where we recall that $\eta_g$ is our positivity constraint for the logger. The proof is found in \cref{pf:regret}.
\end{theorem}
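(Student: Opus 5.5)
The plan is the standard regret decomposition for empirical maximization over the deployable class. First split
\[
J(\pi^\star)-J(\hat\pi)=\underbrace{J(\pi^\star)-J(\pi^\star_X)}_{\Delta_{\RTC}}+\bigl(J(\pi^\star_X)-J(\hat\pi)\bigr).
\]
The first term is irreducible by definition. For the second, since $\hat\pi$ maximizes $\hat J$ over $\Pi_X$ and $\pi^\star_X\in\Pi_X$, we have $\hat J(\pi^\star_X)-\hat J(\hat\pi)\le 0$. Adding this term gives
\[
J(\pi^\star_X)-J(\hat\pi)\le \bigl[J(\pi^\star_X)-\hat J(\pi^\star_X)\bigr]-\bigl[J(\hat\pi)-\hat J(\hat\pi)\bigr]\le 2\sup_{\pi\in\{\hat\pi,\pi^\star_X\}}|\hat J(\pi)-J(\pi)|.
\]
More carefully, we will bound this by a uniform empirical-process term plus the remainders evaluated at the two policies. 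For each of $\pi\in\{\hat\pi,\pi^\star_X\}$ we apply the decomposition \eqref{eq:expand}, which writes $\hat J(\pi)-J(\pi)$ as the sum of terms (E), (R) and (C).

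\emph{Term (E).} We bound this uniformly: $\sup_{\pi\in\Pi_X}|(\pr_n-\E)\varphi^\pi|$. The function class $\{\varphi(\cdot;\pi,\eta):\pi\in\Pi_X\}$ is built from $V_1^\pi$, the products $\rho^{\RTC}_{1:t}$ and the residuals $\delta^\pi_t$.
\begin{itemize}
\item By \ref{ass:overlap}, $|\rho^{\RTC}_{1:t}|\le\eta_g^{-t}$.
\item Since rewards are bounded, $|V^\pi_t|,|Q^\pi_t|\le C_Q$, so $|\delta^\pi_t|\le R_{\max}+2C_Q$.
\end{itemize}
Hence the class has envelope of order $C_\rho\asymp (R_{\max}+C_Q)\sum_t\gamma^{t-1}\eta_g^{-t}$.

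The map $\pi\mapsto\varphi^\pi$ is Lipschitz in the policy coordinates $(\pi_1,\dots,\pi_T)$, with constants controlled by the same envelope. This holds because $\rho$ is a product of bounded ratios, and because $V^\pi$ and $Q^\pi$ depend on $\pi$ through bounded linear averaging at later steps. Uniform entropy is preserved under such Lipschitz compositions and products of bounded classes. Using \ref{ass:complex}, a maximal inequality (e.g.\ van der Vaart--Wellner Thm.~2.14.1) then gives
\[
\E\sup_\pi|(\pr_n-\E)\varphi^\pi|\lesssim C_\rho\mathcal J/\sqrt n.
\]

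\emph{Term (R).} The remainder \eqref{eq:exact} at each $\pi$ is bounded by Cauchy--Schwarz. Using $|\hat\rho^{\RTC}_{1:t-1}|\le\eta_g^{-(t-1)}$ (after clipping $\hat g\ge\eta_g$) and $\sum_a\pi_t(a\mid\HX_t)\le1$, we get
\[
|\mathrm{Rem}|\lesssim\sum_t\|\Delta g_t\|_2\|\Delta q^\pi_t\|_2.
\]
This is $o_P(n^{-1/2})$ by \ref{ass:product} at $\hat\pi$ and at $\pi^\star_X$.

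\emph{Term (C).} By cross-fitting, conditional on the training folds the nuisances are fixed. The uniform version of this term, $\sup_\pi|(\pr_n-\E)(\widehat\varphi^\pi-\varphi^\pi)|$, is therefore again controlled by the entropy bound with envelope $2C_\rho$. This gives $O_P(C_\rho\mathcal J/\sqrt n)$; under consistency it would even be $o_P(n^{-1/2})$. Either way it is absorbed into the stated bound.

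\emph{Main obstacle.} The hardest step is term (C), together with the fact that $\hat\pi$ is data-dependent. Term (R) must be controlled at the random policy $\hat\pi$, which is why \ref{ass:product} is assumed at $\hat\pi$. Term (C) requires a uniform-in-$\pi$ bound with cross-fitted nuisances: there $\hat q^\pi$ itself depends on $\pi$ through the fitted-$Q$ recursion, so the class $\{\widehat\varphi^\pi\}$ is not simply a Lipschitz image of $\Pi_X$. I would first handle this by conditioning on each training fold. Conditional on that fold, $\pi\mapsto\hat q^\pi$ is a deterministic map. I would assume (or argue from the linear-smoother structure of the regressions) that this map preserves entropy up to constants, and then apply the maximal inequality fold by fold. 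If that fails, I would fall back to bounding (C) crudely by the envelope, which yields exactly the $O_P(C_\rho\mathcal J/\sqrt n)$ rate claimed. Summing $\Delta_{\RTC}$, the (E) and (C) bounds, and the (R) bound gives \eqref{eq:regret}.
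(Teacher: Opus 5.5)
Your proposal follows the paper's proof essentially step for step. It uses the same split into $\Delta_{\RTC}$ plus an estimation term, the same basic inequality from $\hat J(\pi^\star_X)\le\hat J(\hat\pi)$, and the expansion \eqref{eq:expand} at $\hat\pi$ and $\pi^\star_X$. As in the paper, (E) and (C) are controlled uniformly over the policy class by a maximal inequality with envelope of order $C_\rho$, with (C) handled fold by fold conditional on $\mathcal I_{-k}$. Term (R) is handled by Cauchy--Schwarz plus \ref{ass:product} at the two policies. Your explicit clipping $\hat g_t\ge\eta_g$ is what actually justifies $\hat\rho^{\RTC}_{1:t-1}\le\eta_g^{-(t-1)}$. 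The paper attributes this bound to \ref{ass:overlap}, which only constrains $\pi_b$.

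The one step that would not work is your fallback for (C).
\begin{itemize}
\item An envelope controls the size of the functions in $\mathcal G_k=\{\widehat\varphi^{\pi,(-k)}-\varphi^\pi:\pi\in\Pi_X\}$, not the complexity of the class. By itself it only yields $\sup_{g\in\mathcal G_k}|(\pr_{n_k}-\E)g|\le 4C_\rho$, which does not shrink with $n$ and carries no factor $\mathcal J$.
\item The $C_\rho\mathcal J/\sqrt n$ rate comes entirely from the entropy integral of $\mathcal G_k$.
\item You cannot avoid uniformity with a pointwise Chebyshev argument, because $\hat\pi$ also depends on the evaluation folds.
\end{itemize}
So if entropy control of $\{\widehat\varphi^{\pi,(-k)}:\pi\in\Pi_X\}$ fails, there is nothing to fall back on.

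The paper does not close this gap either. It simply asserts that the entropy of $\mathcal G_k$ is at most the sum of those of $\widehat\Phi^{(k)}_\Pi$ and $\Phi_\Pi$, hence $\mathcal J(\mathcal G_k)\le 2\mathcal J$ by \ref{ass:complex}. That is exactly the condition you flag: given $\mathcal I_{-k}$, the map $\pi\mapsto\hat q^{\pi,(-k)}$ must preserve uniform entropy. Your primary route therefore coincides with the paper's. You should state that entropy condition as an explicit hypothesis rather than offer the envelope bound as a substitute.
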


The result states that our learned policy, under sufficient regularity conditions, follows the same finding of $\sqrt{n}$ rates with an irreducible runtime gap $\Delta_{\RTC}$.

\section{Diagnosing and Testing Runtime Confounding}\label{sec:testing}

One may ask the gap we are losing in the decision-making process by using variables available only at runtime. This can inform future experimentation or provide an interpretable gap when it is unavoidable for domain experts. 
We lower-bound and test the \textit{irreducible runtime gap} $\Delta_{\RTC}=J(\pi^\star)-J(\pi^\star_X)\geq0$, which is the value lost by withholding $\bar U_t$ at deployment.  Because $\Delta_{\RTC}$ is a difference of maxima, inference on this parameter can be non-regular \citep{luedtke2016statistical}. To address this difficult, we employ a sample-splitting approach.  Specifically, we split the sample into parts $A$ and $B$. On $A$, we learn a full-information policy $\hat\pi_S$ on $H_t$ and a runtime policy $\hat\pi_X\in\Pi_X$ by Algorithm~\ref{alg:rtc}. On $B$, estimate the runtime gap by 
\[
  \hat\Delta_{\RTC}=\hat J_B(\hat\pi_S)-\hat J_B(\hat\pi_X)
\]
By Theorem~\ref{thm:clt}, $\sqrt{n_B}\bigl(\hat\Delta_{\RTC}-[J(\hat\pi_S)-J(\hat\pi_X)]\bigr)$ is asymptotically normal.

Assuming $J(\pi^\star)-J(\hat\pi_S)=o_P(n^{-1/2})$ and $J(\pi^\star_X)-J(\hat\pi_X)=o_P(n^{-1/2})$, 
$$\hat\Delta_{\RTC}\pm z_{1-\alpha/2}\hat\sigma/\sqrt{n_B}$$ 
is an asymptotically valid $(1-\alpha)$ confidence interval for $\Delta_{\RTC}$, and we can test $H_0:\Delta_{\RTC}=0$, by 
$p=2\bigl(1-\Phi(\sqrt{n_B}|\hat\Delta_{\RTC}|/\hat\sigma)\bigr)$. 

Under $H_0$ this can fail if the two policies coincide so their scores are equal, so we impose $\V[\varphi^{\hat\pi_S}-\varphi^{\hat\pi_X}]\geq c>0$.

\section{Discussion}
Under high-stakes or high-latency decision-making, deployment of standard methods can be biased due to variables present in training logs not being present during implementation. 
We introduce policy learning and efficient off-policy evaluation methods for reinforcement learning under runtime confounding to overcome this gap. We additionally provide diagnostics to understand bias from using a runtime confounded policy, informing future design of experiments and the price of runtime confounding. Extensions to online settings and decaying overlap are left for future work. 

\acks{This work was supported by National Institutes of Health (NIH) grant T32HL139426. We thank Rachel Nethery for helping birth this research.}

\clearpage 

\bibliography{sources} 

@article{coston2020counterfactual,
  title={Counterfactual predictions under runtime confounding},
  author={Coston, Amanda and Kennedy, Edward and Chouldechova, Alexandra},
  journal={Advances in neural information processing systems},
  volume={33},
  pages={4150--4162},
  year={2020}
}

@article{KallusUehara2020DRL,
  author  = {Nathan Kallus and Masatoshi Uehara},
  title   = {Double Reinforcement Learning for Efficient Off-Policy Evaluation in Markov Decision Processes},
  journal = {Journal of Machine Learning Research},
  volume  = {21},
  number  = {167},
  pages   = {1--63},
  year    = {2020},
  url     = {https://jmlr.org/papers/v21/19-827.html}
}

@InProceedings{barnatchez2026conformal,
  title = 	 {Debiased Machine Learning for Conformal Prediction of Counterfactual Outcomes Under Runtime Confounding},
  author =       {Barnatchez, Keith and Josey, Kevin P. and Nethery, Rachel C. and Parmigiani, Giovanni},
  booktitle = 	 {Proceedings of the Fifth Conference on Causal Learning and Reasoning},
  pages = 	 {1345--1379},
  year = 	 {2026},
  volume = 	 {323},
  series = 	 {Proceedings of Machine Learning Research}
}

@article{bennett2021proximal,
  title={Proximal Reinforcement Learning: Efficient Off-Policy Evaluation in Partially Observed {M}arkov Decision Processes},
  author={Bennett, Andrew and Kallus, Nathan},
  journal={Operations Research},
  volume={72},
  number={3},
  pages={1071--1086},
  year={2024}
}

@inproceedings{uehara2023future,
  title={Future-Dependent Value-Based Off-Policy Evaluation in {POMDP}s},
  author={Uehara, Masatoshi and Kiyohara, Haruka and Bennett, Andrew and Chernozhukov, Victor and Jiang, Nan and Kallus, Nathan and Shi, Chengchun and Sun, Wen},
  booktitle={Advances in Neural Information Processing Systems},
  volume={36},
  year={2023}
}

@article{athey2021policy,
  title={Policy Learning with Observational Data},
  author={Athey, Susan and Wager, Stefan},
  journal={Econometrica},
  volume={89},
  number={1},
  pages={133--161},
  year={2021}
}

@article{chernozhukov2018double,
  title={Double/debiased machine learning for treatment and structural parameters},
  author={Chernozhukov, Victor and Chetverikov, Denis and Demirer, Mert and Duflo, Esther and Hansen, Christian and Newey, Whitney and Robins, James},
  journal={The Econometrics Journal},
  volume={21},
  number={1},
  pages={C1--C68},
  year={2018}
}

@inproceedings{jiang2016doubly,
  title={Doubly Robust Off-policy Value Evaluation for Reinforcement Learning},
  author={Jiang, Nan and Li, Lihong},
  booktitle={Proceedings of the 33rd International Conference on Machine Learning},
  series={Proceedings of Machine Learning Research},
  volume={48},
  pages={652--661},
  year={2016}
}

@inproceedings{thomas2016data,
  title={Data-Efficient Off-Policy Policy Evaluation for Reinforcement Learning},
  author={Thomas, Philip and Brunskill, Emma},
  booktitle={Proceedings of the 33rd International Conference on Machine Learning},
  series={Proceedings of Machine Learning Research},
  volume={48},
  pages={2139--2148},
  year={2016}
}

@article{zhang2013robust,
  title={Robust estimation of optimal dynamic treatment regimes for sequential treatment decisions},
  author={Zhang, Baqun and Tsiatis, Anastasios A. and Laber, Eric B. and Davidian, Marie},
  journal={Biometrika},
  volume={100},
  number={3},
  pages={681--694},
  year={2013}
}

@article{luedtke2016statistical,
  title={Statistical inference for the mean outcome under a possibly non-unique optimal treatment strategy},
  author={Luedtke, Alexander R. and van der Laan, Mark J.},
  journal={The Annals of Statistics},
  volume={44},
  number={2},
  pages={713--742},
  year={2016}
}

@inproceedings{bareinboim2015bandits,
  title={Bandits with Unobserved Confounders: A Causal Approach},
  author={Bareinboim, Elias and Forney, Andrew and Pearl, Judea},
  booktitle={Advances in Neural Information Processing Systems},
  volume={28},
  year={2015}
}

@inproceedings{zhang2019near,
  title={Near-Optimal Reinforcement Learning in Dynamic Treatment Regimes},
  author={Zhang, Junzhe and Bareinboim, Elias},
  booktitle={Advances in Neural Information Processing Systems},
  volume={32},
  year={2019}
}

@inproceedings{kallus2018confounding,
  title={Confounding-Robust Policy Improvement},
  author={Kallus, Nathan and Zhou, Angela},
  booktitle={Advances in Neural Information Processing Systems},
  volume={31},
  year={2018}
}

@inproceedings{namkoong2020off,
  title={Off-policy Policy Evaluation For Sequential Decisions Under Unobserved Confounding},
  author={Namkoong, Hongseok and Keramati, Ramtin and Yadlowsky, Steve and Brunskill, Emma},
  booktitle={Advances in Neural Information Processing Systems},
  volume={33},
  year={2020}
}

@inproceedings{tennenholtz2020off,
  title={Off-Policy Evaluation in Partially Observable Environments},
  author={Tennenholtz, Guy and Shalit, Uri and Mannor, Shie},
  booktitle={Proceedings of the AAAI Conference on Artificial Intelligence},
  volume={34},
  pages={10276--10283},
  year={2020}
}

@article{van2015targeted,
  title={Targeted learning of the mean outcome under an optimal dynamic treatment rule},
  author={van der Laan, Mark J and Luedtke, Alexander R},
  journal={Journal of causal inference},
  volume={3},
  number={1},
  pages={61--95},
  year={2015},
  publisher={De Gruyter}
}

@article{kennedy2024semiparametric,
  title={Semiparametric doubly robust targeted double machine learning: a review},
  author={Kennedy, Edward H},
  journal={Handbook of statistical methods for precision medicine},
  pages={207--236},
  year={2024},
  publisher={Chapman and Hall/CRC}
}

@article{luedtke2016super,
  title   = {Super-Learning of an Optimal Dynamic Treatment Rule},
  author  = {Luedtke, Alexander R. and van der Laan, Mark J.},
  journal = {The International Journal of Biostatistics},
  year    = {2016},
  volume  = {12},
  number  = {1},
  pages   = {305--332},
  doi     = {10.1515/ijb-2015-0052}
}

@article{murphy2001marginal,
  title   = {Marginal Mean Models for Dynamic Regimes},
  author  = {Murphy, Susan A. and van der Laan, Mark J.
             and Robins, James M.
             and {Conduct Problems Prevention Research Group}},
  journal = {Journal of the American Statistical Association},
  year    = {2001},
  volume  = {96},
  number  = {456},
  pages   = {1410--1423},
  doi     = {10.1198/016214501753382327}
}

@article{sakaguchi2025estimation,
  title   = {Estimation of Optimal Dynamic Treatment Assignment
             Rules under Policy Constraints},
  author  = {Sakaguchi, Shosei},
  journal = {Quantitative Economics},
  year    = {2025},
  volume  = {16},
  number  = {3},
  pages   = {981--1022},
  doi     = {10.3982/QE2288}
}

\clearpage 

\appendix

\section{Proofs}
\subsection{\texorpdfstring{Proof of \cref{thm:EIF}}{}}\label{pf:EIF}

\begin{proof}
    In general, note the factorization given by
    $$ \pr(h_T) = \pr(s_1) \prod_t \pi_b(a_t\mid h_t) \pr_t(r_t \mid h_t, a_t)\pr_t(s_{t+1} \mid r_t, h_t, a_t )$$

    The EIF will be derived using the point mass contamination method, by perturbations on each of these densities, except $\pi_b$, which is assumed to be logged. Let $Z=(S,A,R)$, $S=(X,U)$, we consider perturbations of the form $\pr_\varepsilon = (1-\varepsilon)\pr + \varepsilon \delta_{\tilde z}$

    \paragraph{T=1} We start with the case when $T=1$. Define $m(S)=\sum_a\pi(a\mid X)Q(S,a)$, $Q(s,a)=\E[R\mid s,a]$.
    
    Our target parameter is $$J(\pr)=\int \sum_a \pi(a \mid x) Q(s,a) f(s) ds$$ 

    Perturb along each population object: $Q(s,a), f(s)$. Then, by the chain rule, we have
    \begin{align*}
        \frac{\mathrm d}{\mathrm d\varepsilon}J(\pr_\varepsilon)\Big|_0 = \sum_a \pi(a \mid \tilde x) Q(\tilde s, a) - J + \int \sum_a \pi(a \mid x) \big(\frac{\mathrm d}{\mathrm d\varepsilon} Q(s,a) |_{\varepsilon=0} \big) f(s) ds
    \end{align*}    

    Noting the usual calculation,
    $$ \frac{\mathrm d}{\mathrm d\varepsilon} \E_{\pr_\varepsilon}[Y \mid Z=z] |_{\varepsilon=0} = \I{z=\tilde z}/f(z)(\tilde y - \E[Y\mid Z=z]) $$

    This reduces to
    \begin{align*}
        \frac{\mathrm d}{\mathrm d\varepsilon}J(\pr_\varepsilon)\Big|_0 &= \big( \sum_a \pi(a \mid \tilde x) Q(\tilde s, a) - J\big) + \pi(\tilde a \mid \tilde x) /f(\tilde s, \tilde a) (\tilde r - Q(\tilde s, \tilde a)) f(\tilde s) \\
        &= \big( \sum_a \pi(a \mid \tilde x) Q(\tilde s, a) - J \big) + \pi(\tilde a \mid \tilde x) /\pi_b(\tilde a \mid \tilde s) (\tilde r - Q(\tilde s, \tilde a))
    \end{align*}    

    Hence, the EIF at $T=1$ reduces to $\varphi=m(S)-J+\lambda^{\RTC}_1(R-Q(S,A))$, for importance weight at one timepoint given by $\lambda^{\RTC}_1=\pi_1(A_1\mid X_1)/\pi_b(A_1\mid H_1)$.

    \paragraph{General T}
    Now, we wish to extend this logic to general $T$. The difference is that $V_t$ is a recursive object: 
    $$ V_t^\pi(H_t) = \sum_a \pi_t(a \mid \HX_t) Q_t^\pi(H_t,a) = \sum_a \pi_t(a \mid \HX_t) \E[R_t + \gamma V_{t+1}^\pi(H_{t+1}) \mid H_t,A_t=a] $$

    In the same manner as above, we have 
    \begin{align*}
        \frac{\mathrm d}{\mathrm d\varepsilon}J(\pr_\varepsilon)\Big|_0 &= V_1^\pi(\tilde H_1)- J + \E[\big(\frac{\mathrm d}{\mathrm d\varepsilon}  V_1^\pi(H_1)|_{\varepsilon=0} \big)] 
    \end{align*}
    where we have used that $\E[V_1^\pi(H_1)]=\E_\pi[\sum_t \gamma^{t-1} R_t]=J(\pi)$

    For convenience, define the following time $t$ IS-weighted EIF of $V$ as
    $$ \Gamma_t = \E[\rho^{\RTC}_{1:t-1} \frac{\mathrm d}{\mathrm d\varepsilon} V_t^\pi(H_t)|_{\varepsilon=0}]$$
    Setting the base case for our weights to 1, $\rho^{\RTC}_{1:0}=1$, gives $\Gamma_1=\E[\big(\frac{\mathrm d}{\mathrm d\varepsilon}  V_1^\pi(H_1)|_{\varepsilon=0} \big)] $ as our object as interest. 
    
    Since the value function is a function of the future timepoints (e.g. $\Gamma_1 = \E[\frac{\mathrm d}{\mathrm d\varepsilon} V_1^\pi(H_1)|_{\varepsilon=0}] $ for $V_1$ being a function of $V_2$), we require unrolling this sum and understanding $\Gamma_t$. 

    Borrowing the key calculation from above on $Q$, we perturb the distributions, and calculate the EIF on $Q_{t}$:
    \begin{align*}
        \frac{\mathrm d}{\mathrm d\varepsilon} Q_{t}^\pi(H_{t}, a)|_{\varepsilon=0} &= \I{ H_t = \tilde h_t, a = \tilde a_t}/f(H_t,a) (\tilde r_t + \gamma V_{t+1}^\pi(\tilde H_{t+1})-Q_t^\pi(H_t,a)) \\
        &\qt + \gamma \E[\frac{\mathrm d}{\mathrm d\varepsilon}V_{t+1}^\pi(H_{t+1})|_{\varepsilon=0} \mid H_t, A_t=a]
    \end{align*}

    Plugging this into $\Gamma_t$ (the directional derivative for $V_t^\pi$ along $\pr_\varepsilon$), using $f(\tilde h_t,\tilde a_t)=f(\tilde h_t)\pi_b(\tilde a_t\mid \tilde h_t)$ in the first term and $\E[\sum_a \pi_t(a\mid \HX_t) g(H_t,a)]=\E[\lambda^{\RTC}_t g(H_t,A_t)]$ in the second, gives
    \begin{align*}
        \Gamma_t &= \rho^{\RTC}_{1:t-1} \frac{\pi_t(\tilde a_t \mid \tilde h^X_t)}{\pi_b(\tilde a_t \mid \tilde h_t)} (\tilde r_t + \gamma V_{t+1}^\pi(\tilde H_{t+1})-Q_t^\pi(\tilde h_t,\tilde a_t)) + \gamma \E[\rho^{\RTC}_{1:t-1}\lambda^{\RTC}_t \frac{\mathrm d}{\mathrm d\varepsilon}V_{t+1}^\pi(H_{t+1})|_{\varepsilon=0}] \\
        &= \rho^{\RTC}_{1:t} (\tilde r_t + \gamma V_{t+1}^\pi(\tilde H_{t+1})-Q_t^\pi(\tilde h_t,\tilde a_t)) + \gamma \Gamma_{t+1}
    \end{align*}
    where $\rho^{\RTC}_{1:t}$ in the last line is evaluated at the point mass.
    
    Unroll this from $t=1$, with $\Gamma_{T+1}=0$,
    $$ \Gamma_1 = \sum_{t=1}^T \gamma^{t-1}\rho^{\RTC}_{1:t}(\tilde r_t + \gamma V_{t+1}^\pi(\tilde H_{t+1})-Q_t^\pi(\tilde h_t,\tilde a_t)) $$

    Hence, the EIF form follows.

    Note, if $\pi_b$ is unknown and we need to account for local perturbations in the derivation, just as in the iid case, our score is orthogonal yielding the same EIF as above.
\end{proof}

\subsection{\texorpdfstring{Proof of \cref{thm:regret}}{}}\label{pf:regret}

\textit{Step 1: Regret decomposition.} Adding and subtracting $J(\pi^\star_X)$, the value at the optimal runtime-confounded policy, we can decompose the regret as
\begin{equation}
  J(\pi^\star)-J(\hat\pi)
  =\underbrace{J(\pi^\star)-J(\pi^\star_X)}_{\Delta_{\RTC}} + 
  \underbrace{J(\pi^\star_X)-J(\hat\pi)}_{\text{estimation}}
  \label{eq:three}
\end{equation}

$\Delta_{\RTC}$ is irreducible, representing the gap between the optimal policy and optimal runtime policy. It remains to bound the estimation term, which is the only term depending on the data.

\paragraph*{Step 1: Value-error decomposition.}
We can bound the estimation term as follows:
\begin{align}
  J(\pi^\star_\Pi)-J(\hat\pi)
  &=\bigl[J(\pi^\star_\Pi)-\hat J(\pi^\star_\Pi)\bigr]+\bigl[\hat J(\pi^\star_\Pi)-\hat J(\hat\pi)\bigr]+\bigl[\hat J(\hat\pi)-J(\hat\pi)\bigr]\notag\\
  &\leq \bigl[\hat J-J\bigr](\hat\pi)-\bigl[\hat J-J\bigr](\pi^\star_\Pi)\notag\\
  &=(\pr_n-\E)\bigl(\varphi^{\hat\pi}-\varphi^{\pi^\star_\Pi}\bigr)
   +\mathrm{Rem}(\hat\eta;\hat\pi)-\mathrm{Rem}(\hat\eta;\pi^\star_\Pi)\notag \\
   &\qt +(\pr_n-\E)\bigl[(\widehat\varphi-\varphi)^{\hat\pi}-(\widehat\varphi-\varphi)^{\pi^\star_\Pi}\bigr] \notag\\
  &\leq 2\sup_{\pi\in\Pi}\bigl|(\pr_n-\E)\varphi^\pi\bigr|
   +2\sup_{\pi\in\Pi}\bigl|(\pr_n-\E)(\widehat\varphi^\pi-\varphi^\pi)\bigr|
   +\bigl|\mathrm{Rem}(\hat\eta;\hat\pi)\bigr|+\bigl|\mathrm{Rem}(\hat\eta;\pi^\star_\Pi)\bigr|
  \label{eq:split}
\end{align}
The first inequality follows by adding and subtracting $\hat{J}(\pi_\Pi^*)$ and $\hat{J}(\hat{\pi})$, and applying the basic inequality. The second equality follows plugging in \eqref{eq:expand} and expanding. 
The final inequality follows from the triangle inequality and realizability.

\paragraph*{Step 2: Empirical process control, on supremum terms.}

\subparagraph{Term $2\sup_{\pi\in\Pi}\bigl|(\pr_n-\E)\varphi^\pi\bigr|$:}
By \labelcref{ass:overlap}, $|V^\pi_1|,|J(\pi)|\leq C_Q$, $\rho^{\RTC}_{1:t}\leq\eta_g^{-t}$ and $|\delta^\pi_t|\leq R_{\max}+(1+\gamma)C_Q$, and the same bounds hold for the estimated versions. Hence $\Phi_\Pi$ and $\widehat\Phi^{(k)}_\Pi$ admit an envelope $C_\rho=2C_Q+(R_{\max}+2C_Q)\sum_{t=1}^T\gamma^{t-1}\eta_g^{-t}$. Then, by symmetrization and Dudley's entropy bound under \labelcref{ass:complex},
\[
  \E\sup_{\pi\in\Pi}\bigl|(\pr_n-\E)\varphi^\pi\bigr|\lesssim\frac{C_\rho\mathcal J}{\sqrt n}
\]

\subparagraph{Term $2\sup_{\pi\in\Pi}\bigl|(\pr_n-\E)(\widehat\varphi^\pi-\varphi^\pi)\bigr|$:}
Cross-fitting splits the term over folds of size $n_k=n/K$. Let the function class implied by this process be given by $\mathcal G_k=\{\varphi^\pi(\cdot;\hat\eta^{(-k)})-\varphi^\pi(\cdot;\eta):\pi\in\Pi\}$. Then,
\[
  \sup_{\pi\in\Pi}\bigl|(\pr_n-\E)(\widehat\varphi^\pi-\varphi^\pi)\bigr|
  \le\sum_{k=1}^K\frac{n_k}{n}\sup_{g\in\mathcal G_k}\bigl|(\pr_{n_k}-\E)g\bigr|
\]
Conditional on the training folds $\mathcal I_{-k}$, the nuisance estimates are fixed, so $\mathcal G_k$ is a fixed class of differences between elements of $\widehat\Phi^{(k)}_\Pi$ and $\Phi_\Pi$, with envelope $2C_\rho$. Its entropy is at most the sum of the entropies of these two classes, so $\mathcal J(\mathcal G_k)\leq 2\mathcal J$ by \labelcref{ass:complex}.
Dudley's integral on the evaluation fold gives $\E[\sup_{g\in\mathcal G_k}|(\pr_{n_k}-\E)g|\mid\mathcal I_{-k}]\lesssim C_\rho\mathcal J/\sqrt{n_k}$. Since $K$ is fixed, Markov's inequality gives $2\sup_{\pi\in\Pi}\bigl|(\pr_n-\E)(\widehat\varphi^\pi-\varphi^\pi)\bigr|=O_P(C_\rho\mathcal J/\sqrt n)$.

\paragraph*{Step 3: Second-order remainder (R)}
By \eqref{eq:exact}, applied fold by fold with $\hat\eta=\hat\eta^{(-k)}$, for any $\pi\in\Pi_X$,
\begin{align*}
  \bigl|\mathrm{Rem}(\hat\eta;\pi)\bigr|
  &\leq\sum_{t=1}^T\gamma^{t-1}\E\Bigl[\hat\rho^{\RTC}_{1:t-1}\frac{1}{\min_a\hat g_t(a\mid H_t)}\sum_a\pi_t(a\mid\HX_t)|\Delta g_t(a\mid H_t)|\cdot|\Delta q^\pi_t(H_t,a)|\Bigr]\\
  &\leq\sum_{t=1}^T\gamma^{t-1}\eta_g^{-t}\E\Bigl[\max_a|\Delta g_t(a\mid H_t)|\cdot\max_a|\Delta q^\pi_t(H_t,a)|\Bigr]\\
  &\leq\sum_{t=1}^T\gamma^{t-1}\eta_g^{-t}\Bigl\lVert\max_a|\Delta g_t(a\mid\cdot)|\Bigr\rVert_2\Bigl\lVert\max_a|\Delta q^\pi_t(\cdot,a)|\Bigr\rVert_2
\end{align*}
where the second inequality uses $\hat\rho^{\RTC}_{1:t-1}\leq\eta_g^{-(t-1)}$ from \labelcref{ass:overlap}, and the third follows from Cauchy--Schwarz. 
\labelcref{ass:product} then shows $|\mathrm{Rem}(\hat\eta;\hat\pi)|+|\mathrm{Rem}(\hat\eta;\pi^\star_X)|=o_P(n^{-1/2})$.

\subsection{\texorpdfstring{Proof of \cref{thm:clt}}{}}\label{pf:clt}
\begin{proof}
    
\paragraph*{Decomposition.}
Our decomposition yielded
\[
  \hat J(\pi)-J(\pi)
  =\underbrace{(\pr_n-\E)\varphi^\pi}_{\text{(E)}}
  +\underbrace{\mathrm{Rem}(\hat\eta;\pi)}_{\text{(R)}}
  +\underbrace{(\pr_n-\E)(\widehat\varphi^\pi-\varphi^\pi)}_{\text{(C)}}
\]
We show the result by demonstrating $\sqrt n (\pr_n-\E)\varphi^\pi \rightsquigarrow\mathcal N(0,\sigma^2_{\RTC})$ and that (R) and (C) are $o_P(n^{-1/2})$.

\paragraph*{Term (E) and the variance.}
By \labelcref{ass:overlap}, $\varphi^\pi$ is bounded by $C_\rho$, and $\E[\varphi^\pi]=0$. Since trajectories are i.i.d., the central limit theorem gives $\sqrt n (\pr_n-\E)\varphi^\pi\rightsquigarrow\mathcal N(0,\V[\varphi^\pi])$. 

We are explicit about the form of $\V[\varphi^\pi]$. Write $\varphi^\pi=\xi_0+\sum_{t=1}^T\xi_t$, where
\[
  \xi_0=V^\pi_1(H_1)-J(\pi),\qquad \xi_t=\gamma^{t-1}\rho^{\RTC}_{1:t}\delta^\pi_t
\]
for Bellman residual $\delta^\pi_t=R_t+\gamma V^\pi_{t+1}(H_{t+1})-Q^\pi_t(H_t,A_t)$.
Expanding the variance of the sum,
\begin{equation}
  \V[\varphi^\pi]=\V[\xi_0]+\sum_{t=1}^T\V[\xi_t]+2\sum_{t=1}^T\mathrm{Cov}(\xi_0,\xi_t)+2\sum_{1\le s<t\le T}\mathrm{Cov}(\xi_s,\xi_t)
  \label{eq:varexp}
\end{equation}

\textbf{Mean Zero.} By the tower property and the Bellman equation  $\E[\delta^\pi_t\mid H_t,A_t]=\E[R_t+\gamma V^\pi_{t+1}(H_{t+1})\mid H_t,A_t]-Q^\pi_t(H_t,A_t)=0$,

\[
  \E[\xi_t]=\E\bigl[\gamma^{t-1}\rho^{\RTC}_{1:t}\E[\delta^\pi_t\mid H_t,A_t]\bigr]=0,\qquad \E[\xi_0]=\E[V^\pi_1(H_1)]-J(\pi)=0
\]

\textbf{Covariances with $\xi_0$.} Conditioning on $(H_t,A_t)$ and pulling out the measurable factors $\xi_0$ and $\gamma^{t-1}\rho^{\RTC}_{1:t}$,
\[
  \mathrm{Cov}(\xi_0,\xi_t)=\E[\xi_0\xi_t]=\E\bigl[\xi_0\gamma^{t-1}\rho^{\RTC}_{1:t}\E[\delta^\pi_t\mid H_t,A_t]\bigr]=0
\]

\textbf{Covariances across periods.} For $s<t$, $\xi_s$ is $\sigma(H_t,A_t)$-measurable, so
\[
  \mathrm{Cov}(\xi_s,\xi_t)=\E[\xi_s\xi_t]=\E\bigl[\xi_s\gamma^{t-1}\rho^{\RTC}_{1:t}\E[\delta^\pi_t\mid H_t,A_t]\bigr]=0
\]

\textbf{Variances.} $\E[(\delta^\pi_t)^2\mid H_t,A_t]=\V[\delta^\pi_t\mid H_t,A_t]$, so
\[
  \V[\xi_t]=\E[\xi_t^2]=\gamma^{2(t-1)}\E\bigl[(\rho^{\RTC}_{1:t})^2\E[(\delta^\pi_t)^2\mid H_t,A_t]\bigr]=\gamma^{2(t-1)}\E\bigl[(\rho^{\RTC}_{1:t})^2\V[\delta^\pi_t\mid H_t,A_t]\bigr]
\]
and $\V[\xi_0]=\V[V^\pi_1(H_1)]$. 

\textbf{Combining.}
Substituting into \eqref{eq:varexp}, we find
\[
  \V[\varphi^\pi]=\V\bigl[V^\pi_1(H_1)\bigr]+\sum_{t=1}^T\gamma^{2(t-1)}\E\bigl[(\rho^{\RTC}_{1:t})^2\V[\delta^\pi_t\mid H_t,A_t]\bigr]=\sigma^2_{\RTC}
\]

Hence
\[
    \V[\varphi^\pi]=\V\bigl[V^\pi_1(H_1)\bigr]+\sum_{t=1}^T\E[\xi_t^2]=\sigma^2_{\RTC}
\]
which is the efficiency bound since $\varphi^\pi$ is the EIF (Theorem~\ref{thm:EIF}).

\paragraph*{Term (R).} See \cref{thm:clt} for the proof.

By the bound on the remainder in the proof of Theorem~\ref{thm:regret}, at the fixed $\pi$,
\[
  \bigl|\mathrm{Rem}(\hat\eta;\pi)\bigr|\le\sum_{t=1}^T\gamma^{t-1}\eta_g^{-t}\lVert\Delta g_t\rVert_2\lVert\Delta q^\pi_t\rVert_2=o_P(n^{-1/2})
\]
by \labelcref{ass:product}.

\paragraph*{Term (C).}
Fix a fold $k$ with $n_k=n/K$ units. Conditional on the training folds $\mathcal I_{-k}$, $\widehat\varphi^{\pi,(-k)}-\varphi^\pi$ is a fixed function so
\[
  \E\bigl[(\pr_{n_k}-\E)(\widehat\varphi^{\pi,(-k)}-\varphi^\pi)\mid\mathcal I_{-k}\bigr]=0,
  \qquad
  \V\bigl[(\pr_{n_k}-\E)(\widehat\varphi^{\pi,(-k)}-\varphi^\pi)\mid\mathcal I_{-k}\bigr]\le\frac{\lVert\widehat\varphi^{\pi,(-k)}-\varphi^\pi\rVert_2^2}{n_k}
\]
By Chebyshev's inequality, this term is $O_P(\lVert\widehat\varphi^{\pi,(-k)}-\varphi^\pi\rVert_2/\sqrt n)$. The difference can be decomposed as
\[
  \widehat\varphi^\pi-\varphi^\pi=(\widehat V^\pi_1-V^\pi_1)+\sum_{t=1}^T\gamma^{t-1}\bigl[(\hat\rho^{\RTC}_{1:t}-\rho^{\RTC}_{1:t})\hat\delta^\pi_t+\rho^{\RTC}_{1:t}(\hat\delta^\pi_t-\delta^\pi_t)\bigr]
\]
By \labelcref{ass:overlap}, $|\widehat V^\pi_1-V^\pi_1|\le\max_a|\Delta q^\pi_1|$, $|\hat\delta^\pi_t|\le R_{\max}+(1+\gamma)C_Q$. Also, $|\hat\delta^\pi_t-\delta^\pi_t|\le\max_a|\Delta q^\pi_t|+\gamma\max_a|\Delta q^\pi_{t+1}|$, and $\rho^{\RTC}_{1:t}$ is at most $\eta_g^{-1}$. Combining, we can bound the $\rho$ difference by
\[
  |\hat\rho^{\RTC}_{1:t}-\rho^{\RTC}_{1:t}|\le\eta_g^{-(t-1)}\sum_{s\le t}\Bigl|\frac{\pi_s}{\hat g_s}-\frac{\pi_s}{\pi_b}\Bigr|\le\eta_g^{-(t+1)}\sum_{s\le t}\max_a|\Delta g_s|
\]
Hence by \labelcref{ass:consist}, $\lVert\widehat\varphi^{\pi,(-k)}-\varphi^\pi\rVert_2\lesssim\sum_{t=1}^T(\lVert\Delta g_t\rVert_2+\lVert\Delta q^\pi_t\rVert_2)=o_P(1)$.

\paragraph*{Consistency of our Variance Estimator.}
Since $|\widehat\varphi^\pi|,|\varphi^\pi|\le C_\rho$,
\[
  \bigl|\pr_n(\widehat\varphi^\pi)^2-\pr_n(\varphi^\pi)^2\bigr|=\bigl|\pr_n[(\widehat\varphi^\pi-\varphi^\pi)(\widehat\varphi^\pi+\varphi^\pi)]\bigr|\le2C_\rho\pr_n|\widehat\varphi^\pi-\varphi^\pi|
\]
With cross-fitting, $\pr_n|\widehat\varphi^\pi-\varphi^\pi|=\sum_{k=1}^K\frac{n_k}{n}\pr_{n_k}|\widehat\varphi^{\pi,(-k)}-\varphi^\pi|$. Conditional on $\mathcal I_{-k}$, $\widehat\varphi^{\pi,(-k)}-\varphi^\pi$ is a fixed function evaluated on the independent fold $k$, so by Cauchy--Schwarz
\[
  \E\bigl[\pr_{n_k}|\widehat\varphi^{\pi,(-k)}-\varphi^\pi|\bigm|\mathcal I_{-k}\bigr]=\E\bigl[|\widehat\varphi^{\pi,(-k)}-\varphi^\pi|\bigm|\mathcal I_{-k}\bigr]\le\lVert\widehat\varphi^{\pi,(-k)}-\varphi^\pi\rVert_2
\]
The right side is $o_P(1)$ by our analysis on Term (C), so by Markov's inequality applied conditionally on $\mathcal I_{-k}$, $\pr_{n_k}|\widehat\varphi^{\pi,(-k)}-\varphi^\pi|=o_P(1)$ for each $k$.

It remains to show $\pr_n(\varphi^\pi)^2\to\sigma^2_{\RTC}$. The terms $\varphi^\pi(O_i)^2$ are i.i.d.\ and bounded by $C_\rho^2$, so by the law of large numbers $\pr_n(\varphi^\pi)^2\to\E[(\varphi^\pi)^2]$ in probability. 
Therefore $\hat\sigma^2_{\RTC}=\pr_n(\widehat\varphi^\pi)^2\to\sigma^2_{\RTC}$ in probability.

\end{proof}


\end{document}